\documentclass[conference]{IEEEtran}
\IEEEoverridecommandlockouts

\usepackage{cite}
\usepackage{amsmath,amssymb,amsfonts}
\usepackage{graphicx}
\graphicspath{{figures/}}
\usepackage{textcomp}
\usepackage{xcolor}
\usepackage{booktabs}
\usepackage{array}
\usepackage{tabularx}
\usepackage{multirow}
\usepackage{url}
\usepackage{hyperref}

\newcommand{\COFR}{\mathrm{COFR}}
\usepackage{amsthm}
\newtheorem{proposition}{Proposition}

\begin{document}
\bstctlcite{BSTcontrol}  

\title{Accuracy Is Not Service: A Decision-Aware Benchmark for Intermittent-Demand Forecasting
\thanks{Shih-Fen Cheng would like to acknowledge the funding support from the Singapore Ministry of Education (MOE) Academic Research Fund (AcRF) Tier 1 Grant (Project Number: 2026-UI-002).}
}

\author{\IEEEauthorblockN{Joo Ern Chin\textsuperscript{1,2}, Shih-Fen Cheng\textsuperscript{1}, and Aldy Gunawan\textsuperscript{1}}
\IEEEauthorblockA{\textsuperscript{1}\textit{School of Computing and Information Systems, Singapore Management University}\\
80 Stamford Road, Singapore 178902\\
jooern.chin.2025@engd.smu.edu.sg \quad \{sfcheng, aldygunawan\}@smu.edu.sg}
\IEEEauthorblockA{\textsuperscript{2}\textit{ST Logistics Pte. Ltd.}\\
5 Clementi Loop, Singapore 129816}
}

\maketitle
\raggedbottom 

\begin{abstract}

A contract-logistics spare-parts operator is paid on order-level service: an order
counts only if \emph{every} requested line is fulfilled, yet forecasters are
selected based on line-level forecast accuracy. This disconnect matters when
demand is intermittent and lumpy, histories are short, and lead times span months. We benchmarked 38 forecasting methods spanning classical, intermittent-demand, machine-learning, deep-learning, and pretrained foundation models. A common decision-aware protocol evaluates them on an industrial panel drawn from a live contract and two public datasets. Forecast-accuracy rank and order-service rank are \emph{negatively} correlated
on the industrial panel, at -0.555, across methods evaluated on
20{,}330 real multi-item orders.
Service is more closely associated with the \emph{direction} of cumulative forecast bias,
including over-prediction during zero-demand periods, than with point accuracy. Examining bias in
Chronos-2's instance normalization yields a training-free correction that
lifts the per-material fill proxy from 77.5\% to 92.0\% (14.5 percentage points) at the 90\% policy target
and raises the complete-order fill rate from 54\% to 63\%. For reproducibility, we release \emph{RUF} (Regenerate-Until-Fidelity), a
method for generating fidelity-certified synthetic panels on which the findings
reproduce. For intermittent demand, the lowest-error forecast need
not deliver the highest service. Bias direction helps explain
this gap, which can be reduced without retraining.

\end{abstract}

\begin{IEEEkeywords}
intermittent demand, spare parts, forecasting benchmark, foundation models, decision-aware evaluation, applied machine learning
\end{IEEEkeywords}

\section{Introduction}
\label{sec:intro}
Keeping fielded equipment operational requires spare parts that may be ordered
only a few times a year, making inventory planning difficult. A contract-logistics
spare-parts operator is judged on order-level service and penalized for shortfalls; as
an operating threshold, we adopt an illustrative \emph{80-of-80} target
(80\% of customer orders fully filled across 80\% of platforms,
the equipment fleets a contract covers). Demand histories are sparse and replenishment lead times are measured in months.
Service requirements are strict: a multi-item order is incomplete if any component is short. In this setting,
planners use simpler, auditable rules. Like much of the spare-parts and maintenance, repair, and operations (MRO) industry, the operator we study drives
replenishment with moving averages in a spreadsheet or enterprise resource planning (ERP) system
rather than a learned model.

The operational question is which forecaster should inform replenishment.
Accuracy-based selection assumes that the most accurate model is also the best choice. This assumption fails on our industrial panel: moving averages perform well on accuracy, but their bias direction can
reduce service under the common replenishment policy.

Two gaps limit evidence-based method choice here. First, broad
\emph{cross-class} comparison on genuinely intermittent industrial demand is
scarce. Intermittent-demand studies typically compare specialized
estimators, and machine-learning studies compare learners. Recent
foundation-model evaluations often emphasize structurally smoother retail or
macroeconomic series \cite{makridakis_m5_2022,petropoulos_forecasting_2022}.
Second, method choice still defaults to point accuracy. The conventional pipeline
illustrates this assumption: a forecaster is selected on accuracy (mean absolute scaled
error, MASE; symmetric mean absolute percentage error, sMAPE),
its point forecast sets the order-up-to level through the textbook base-stock
formula, and lower error is presumed to translate into higher service
\cite{boylan_syntetos_2021}. Competitions rank methods on accuracy
\cite{makridakis_m5_2022}, and intermittent-demand research tunes bias corrections
to accuracy targets \cite{syntetos2005accuracy}. The forecast--inventory gap is
acknowledged \cite{goltsos2022mindgap}, but it is rarely measured \emph{across
model classes} against an operational target. Accuracy rankings therefore need
validation against service outcomes.

We address these gaps by benchmarking 38 methods under a common decision-aware
protocol and examining how forecast bias
direction contributes to the resulting accuracy--service inversion. We show that the associated bias can be reduced within a foundation
model without retraining and reproduce the findings on a fidelity-certified
synthetic panel.

Our contributions are as follows:
\begin{enumerate}
  \item \textbf{RUF (Regenerate-Until-Fidelity):} we release a fidelity-certified public
  benchmark (CC-BY-4.0, with full evaluation code and open data in a
  public reproduction repository\footnote{\url{https://github.com/sfcheng-research/icdm-2026-reproduction}\label{fn:repo}}), certified against the
  confidential data at the level of fitted marginal distributions and cross-material
  copula dependence, on which the leaderboard,
  inversion (directionally), and correction reproduce. The procedure could support certified benchmarks in other domains.

  \item \textbf{A 38-method cross-class benchmark} for \emph{decision-aware}
  intermittent spare-parts forecasting, spanning eight model classes, three sparsity
  regimes (industrial, RAF, M5), and 31 demand-pattern segments, run under an
  \textbf{audited, no-fallback protocol} (every method regenerated from source,
  failures logged and counted, with no silent baseline substitution),
  which we propose for benchmark releases.

  \item \textbf{A bias-direction diagnostic
  (BDD)}: on intermittent demand the accuracy ranking can \emph{reverse} against the
  contractual service target: across the $n=38$ methods, each scored on 20{,}330 real
  multi-item orders, the Spearman correlation is $\rho=-0.555$ (95\% confidence interval (CI)
  $[-0.78,-0.26]$), strengthening to $\rho=-0.88$ once near-duplicate variants are
  collapsed to one per model class. We examine forecast bias direction through
  its effect on stock targets (Proposition~\ref{prop:bias}), a scalar
  $\alpha$-sweep, and a \emph{hurdle} model that forecasts demand
  \emph{occurrence} and \emph{size} separately. Realized service also depends on
  replenishment timing and order allocation.

  \item \textbf{A zero-training-cost correction} to Chronos-2's instance normalization that
  increases the per-material fill proxy by 14.5 percentage points on the industrial panel
  and 17.1 percentage points on RAF at the 90\% policy target, and raises industrial
  complete-order fill rate from 54\% to 63\%, illustrating the diagnostic's use.
\end{enumerate}

\section{Related Work}
\label{sec:background}
Work on intermittent-demand forecasting includes Croston's
occurrence--size decomposition \cite{croston_forecasting_1972} and its bias
corrections SBA \cite{syntetos2005accuracy} and TSB \cite{teunter_intermittent_2011},
temporal-aggregation heuristics \cite{nikolopoulos_aggregatedisaggregate_2011},
and the Syntetos--Boylan demand taxonomy \cite{boylan2008classification}. The M5 competition evaluated feature-based gradient boosting and global deep models \cite{makridakis_m5_2022,theodorou_forecast_2025}, and
pretrained \emph{foundation} time-series models (Chronos
\cite{ansari_chronos_2024}, TimesFM \cite{das_timesfm_2024}, and Moirai
\cite{woo_moirai_2024}) report competitive zero-shot performance. Other studies forecast the lead-time demand \emph{distribution} directly
by resampling the empirical demand history rather than estimating a point mean (Willemain's
Markov-chain bootstrap \cite{willemain_new_2004} and its empirical variants
\cite{porras2008inventory}). Across these forecasting branches, evaluation is constrained by
data. The public anchors (M4/M5 \cite{makridakis_m4_2018,makridakis_m5_2022}) and the
general time-series benchmarks now used to rank foundation models (the Monash archive
\cite{godahewa_monash_2021} and GIFT-Eval \cite{aksu_gift_2024}) are dominated by
smooth, regularly sampled series with little zero-inflated spare-parts demand.
Realistic intermittent panels are also proprietary. This limits independent
replication of decision-facing findings on sparse demand. RUF addresses this
gap (Section~\ref{sec:methodology}).

Transportation studies also connect demand prediction with operational data
and outcomes. M$^2$--CNN combines spatiotemporal demand patterns with
microscopic information from vacant-taxi movements \cite{cheng2023m2},
while Agarwal et al.\ show that surge-informed demand predictions improve
taxi operations in a driver-guidance analysis \cite{agarwal_trc_2022}.
Our focus is how forecast-accuracy rankings translate into complete-order
service rankings under heavy intermittency.

End-to-end methods train predictors against decision cost \cite{elmachtoub_grigas_2022}
or optimize stock directly from data \cite{qi_endtoend_2023}. Their advantage is
problem-dependent \cite{cameron_perils_2022}; we retain two stages because operators
need explainable, auditable policies that can be retuned per platform.

The forecast--inventory interaction, though acknowledged
\cite{petropoulos_forecasting_2022,goltsos2022mindgap}, is seldom measured on one panel
across many classes against an operational target.
M5-style studies primarily rank methods on accuracy
\cite{makridakis_m5_2022}, and the closest prior work measures a forecast--inventory
link on smooth M5 retail data and finds it weakly \emph{positive}
\cite{theodorou_forecast_2025}. We evaluate 38 methods across model classes against a
contractual \emph{complete-order} service target under heavy intermittency and find
that the relationship \emph{reverses}. Classical intermittent-demand work
calibrates bias corrections within the Croston family precisely to \emph{remove}
over-forecast bias and improve accuracy
\cite{syntetos2005accuracy,teunter_intermittent_2011,syntetos_bias_2001}. Our
finding is that under a complete-order contract, the same bias
can improve service. Removing it for accuracy can reduce service.

\section{Methodology}
\label{sec:methodology}
\label{sec:data}
\subsection{Panels}
The industrial panel contains 4{,}021 benchmark-eligible
materials  (the operator's term for a stocked part, or stock-keeping unit (SKU))\footnote{Panel
slices: 4{,}021 for accuracy (rolling-origin); 3{,}828 of these scored
for order service (20{,}330 orders, 41 platforms); 4{,}558 for pooled
accuracy and service in the Chronos study, including the 537 materials outside
the main benchmark. RUF is a separate synthetic
corpus (5{,}000 materials).} over 81
monthly periods (69 for training, 12 for testing), with limited covariates, long lead times, and
multi-item orders. In the full characterization panel (Table~\ref{tab:data}),
70\% of monthly cells are zero, the average demand interval (ADI) is 4.04, and
89.7\% of series are intermittent or lumpy. The panel is drawn from a
contract-logistics operator's live spare-parts contract; material identifiers and
prices are anonymized, and the panel itself cannot be released under the
operator's commercial terms. This constraint motivates the public RUF
surrogate of Section~\ref{sec:benchmark-release}. We replicate the benchmark on the
public RAF spare-parts panel (5{,}000 series, 90\% zeros; benchmarked on a
stratified 500-series sample, Section~\ref{sec:robustness})
\cite{eaves_kingsman_2004} and use a 250-item monthly M5 panel,
stratified toward M5's sparsest items but still substantially less intermittent than
spare-parts demand, as a low-intermittency anchor (Table~\ref{tab:data}). This contrast tests how rankings vary by regime.

\begin{table}[t]
\centering\footnotesize
\caption{Monthly demand profiles over the full characterization windows:
industrial (4,558 materials, 81 months), RAF (5,000, 84), and M5 (250, 60).
ADI: average demand interval; CV$^2$: squared coefficient of variation.}
\label{tab:data}
\begin{tabular}{lcccc}
\toprule
Dataset & Zero frac. & ADI & CV$^2$ & Interm.+Lumpy \% \\
\midrule
Industrial panel & 0.70 & 4.04 & 2.46 & 89.7 \\
RAF (public)      & 0.90 & 8.65 & 0.63 & 100.0 \\
M5 (stratified) & 0.33 & 1.81 & 0.22 & 56.8 \\
\bottomrule
\end{tabular}
\end{table}

\subsection{RUF: Regenerate-Until-Fidelity generation}
\label{sec:benchmark-release}
We release a certified, 5{,}000-material \textbf{RUF} surrogate of the confidential
panel (seed 42; datasheet in the release; CC-BY-4.0; see footnote~\ref{fn:repo}).
Its generator has three stages.
\emph{(1)~Template sampling:} each material draws a real SKU's aggregate
template: its best-fit distribution family (Poisson, negative-binomial, zero-inflated,
hurdle, compound-Poisson, or gamma-mixture), that family's fitted parameters, a
target zero fraction and CV$^2$, and a platform. \emph{(2)~Occurrence$\times$size synthesis:} demand is modeled as an occurrence process multiplied by a
conditional demand size. A platform-level Gaussian factor couples monthly draws.
Well-determined fits use inverse cumulative distribution functions (CDFs);
otherwise Gaussian-threshold occurrence draws are combined with moment-matched
sizes, including continuous gamma mixtures rounded to positive counts.
Poisson templates use a native count branch. Occurrence probabilities and demand
sizes vary with the template's first-/second-half demand ratio; a per-SKU
empirical percentile bounds the tail. This mechanism models co-occurrence and
non-stationarity rather than imposing log-normal renewal intervals.
Multi-item orders and right-skewed moving-average-price (MAP) unit costs are synthesized, and $\sim$20\%
of materials are designated as cold-start items. \emph{(3)~Regenerate-until-fidelity:} each
material is \emph{redrawn until its realized Syntetos--Boylan class and zero
fraction match its real template} (up to ten draws, retaining the best match and scoring CV$^2$;
84\% of non-cold-start materials pass, and the release includes the
per-material audit), adding
a sample-level fidelity check to aggregate comparisons.

We assess fidelity beyond summary moments at two levels
(Table~\ref{tab:fidelity}; matched 800-SKU and $\sim$2{,}900-pair stratified samples,
fitted using the same code path on a shared 60-month window). \emph{Marginals:}
the distribution of best-fitting families across materials is similar (total-variation
distance (TVD) 0.038, the same most common family (a gamma mixture) on both panels; zero fraction (median) 0.750
vs.\ 0.729, dispersion
index 2.06 vs.\ 2.29). \emph{Dependence:} a bivariate-copula screen matches upper-tail
dependence within 0.01 ($\lambda_U$ 0.107 vs.\ 0.116) and copula-family mix
(TVD 0.068);
both panels exhibit weak tail dependence. The panel approximates the real
data's marginal-family distribution \emph{and} cross-material dependence. The leaderboard, inversion direction, and correction reproduce
on it (Section~\ref{sec:repro}).

\begin{table}[t]
\centering\footnotesize
\caption{Synthetic versus real data: fidelity on matched samples.}
\label{tab:fidelity}
\setlength{\tabcolsep}{5pt}
\begin{tabular}{lrr}
\toprule
Fidelity check & Real & Synthetic \\
\midrule
Top marginal family (gamma-mix2) share & 0.445 & 0.446 \\
Winner-family mix (TVD, lower=closer)   & \multicolumn{2}{c}{0.038} \\
Zero fraction (median)            & 0.750 & 0.729 \\
Dispersion index (median)               & 2.06 & 2.29 \\
Mean upper-tail dependence $\lambda_U$  & 0.107 & 0.116 \\
Mean lower-tail dependence $\lambda_L$  & 0.029 & 0.018 \\
Copula-family mix (TVD)                 & \multicolumn{2}{c}{0.068} \\
\bottomrule
\end{tabular}
\end{table}

\subsection{Demand segmentation}
\label{sec:segments}
The segmented benchmark contains intermittent/lumpy materials; smooth/erratic
materials in Table~\ref{tab:data}'s broader characterization panel are excluded.
The Syntetos--Boylan class (intermittent or lumpy) is crossed with structural
traits, including level shifts, autocorrelation, trend (up or down), seasonality,
ultra-sparsity, and high dispersion. The \textbf{31 demand-pattern segments}
with $\ge10$ materials cover 3,914 of the 4,021 eligible materials; 107 in
smaller groups are omitted from this segment analysis. Boundaries are ADI
$1.32$, CV$^2$ $0.49$, and sparsity tiers at zero-fraction $0.30$ and $0.80$.
These segments define the blocks for the Friedman test and identify
where each model class performs well (Section~\ref{sec:benchmark}).

\noindent\textbf{The 38-method roster.}
The benchmark spans \textbf{38 methods across eight model classes}: 7 classical
(Naive, SeasonalNaive, SES, Holt, ETS, ARIMA, Prophet~\cite{taylor_letham_2018}), 5 intermittent specialists
(Croston, SBA, TSB, ADIDA, Bootstrap), 3 simple averages (MA, WMA, EMA), 9
feature-based machine-learning (ML) methods (LightGBM~\cite{ke_lightgbm_2017}, LightGBM-Ensemble, LightGBM-Quantile, XGBoost~\cite{chen_xgboost_2016}, RF,
GradientBoosting, GlobalMLP, SVR, ElasticNet), 7 deep sequence
models (DeepAR~\cite{salinas_deepar_2020}, DeepAR-Tweedie, N-BEATS~\cite{oreshkin_n-beats_2020}, TFT~\cite{lim_tft_2021}, Transformer, RNN, TwoStageML), 2 state-space models (TBATS,
BATS)~\cite{delivera_tbats_2011}, 1 hybrid (HybridHierarchical), and 4 pretrained foundation models
(Chronos-Bolt tiny/small, TimesFM, Moirai). For this cross-class comparison, we distinguish two Chronos model families:
the benchmark uses \emph{Chronos-Bolt}~\cite{ansari_chronos_2024,aws_chronosbolt_2024}, the distilled fast-inference model, while
the adaptation of Section~\ref{sec:adaptation} uses \emph{Chronos-2}~\cite{ansari_chronos2_2025}, whose
instance normalization is exposed for the modification evaluated there.

\noindent\textbf{Protocol, metrics, and audit.}
Using forecast origins \cite{bergmeir_benitez_2012}, each method $m$ turns
history into a demand-rate forecast $\hat r_{i,o,m}$ for material $i$; we evaluate the
demand it predicts over a full replenishment lead time,
$\hat Y^{(H)}_{i,o}=H\hat r_{i,o,m}$, against observed demand
(Eq.~\ref{eq:ltd}), to match the replenishment horizon.
\begin{equation}\label{eq:ltd}
Y^{(H)}_{i,o}=\sum_{h=1}^{H} y_{i,o+h},\qquad \hat Y^{(H)}_{i,o}=H\,\hat r_{i,o,m}.
\end{equation}
Table~\ref{tab:leaderboard} and the industrial order-book accuracy ranks use a
single six-month lead-time window: origin 69 for industrial and 78 for RAF.
The reference predicts $H\bar y_{i,o}^{(3)}$, where $\bar y_{i,o}^{(3)}$ is the
last three training observations' mean; the method named Naive instead repeats
the last observation. Set $e_i=|Y^{(H)}_{i,o}-\hat Y^{(H)}_{i,o,m}|$ and
$d_i=|Y^{(H)}_{i,o}-H\bar y_{i,o}^{(3)}|$. The benchmark convention is
\begin{equation}\label{eq:mase}
\mathrm{MASE}_m=\frac{1}{|\mathcal{M}_m|}\sum_{i\in\mathcal{M}_m}
\begin{cases}e_i/d_i,&d_i>0,\\e_i,&d_i=0,\end{cases}
\end{equation}
where $\mathcal{M}_m$ contains valid material forecasts. This legacy score uses
unscaled absolute error when the reference is exact; it is not conventional
in-sample-scaled MASE, nor does a value below 1 universally imply superiority
to the reference. Lower is better. Rolling-origin adaptation instead uses the
pooled ratio $\sum_{i,o}e_{i,o}/\sum_{i,o}d_{i,o}$, with an undefined score if the
pooled denominator is zero. We identify the aggregation for each comparison.
Rankings are stable across naive scales (Section~\ref{sec:robustness}).
We propose an \emph{audited, no-fallback protocol} for benchmark releases.
Every method's result stream is regenerated from source. Any failure
(non-convergence, missing forecast) is \emph{logged and counted,
never silently replaced by a baseline}. Such substitution would overstate a
method's coverage. We cite only audited streams.

\noindent\textbf{Non-seasonality.}
A four-detector seasonality audit (classical and locally smoothed seasonal-trend
decomposition, Fourier power-ratio, and lag-12 autocorrelation, all purpose-built
for a period-12 signal) flags at most 5.3\% of the
industrial panel and 2.4\% of RAF as seasonal. These low rates suggest that
seasonal misspecification is unlikely to be the main driver of simple rate estimators' competitiveness
or the accuracy--service comparison.

\subsection{From forecasts to service}
\label{sec:setting}
To translate forecasts into service outcomes, we evaluate every method under one \emph{common}
order-up-to policy (at each review, replenishment is ordered toward a target level $S$), so any
service difference is attributable to the forecast, not
to per-method tuning. The base-stock level uses the standard normal-approximation
form (Eq.~\ref{eq:basestock}), with protection period $L+R$ (lead time plus
review interval) and a single safety
factor $z=\Phi^{-1}(\tau)$ shared across all methods at policy target $\tau$:
\begin{equation}\label{eq:basestock}
S_{i,m}=\hat\mu_{i,m}\,(L+R)+z\,\hat\sigma_{i,m}\sqrt{L+R}.
\end{equation}
Here $L=6$ months and $R=1$ month unless varied; $\hat\mu$ is the mean rate of
the extracted point forecasts and $\hat\sigma$ is the forecaster-supplied
dispersion estimate, whose estimation is method-specific. Targets are clipped
at zero. Each replay starts with $S$ on hand and an empty replenishment pipeline.
Receipts precede demand; orders placed after demand arrive $L$ months later.
Service is the complete-order fill rate (COFR): replaying the operator's actual
multi-item orders, an order succeeds only if \emph{every} requested line is
available within the window \cite{song_order_1998},
\begin{equation}\label{eq:cofr}
\COFR_m=\frac{1}{|\mathcal{Q}|}\sum_{q\in\mathcal{Q}}
\mathbf{1}\!\Big\{\min_{j\in\mathcal{I}_q}F_{qj,m}=1\Big\},
\end{equation}
where $F_{qj,m}=1$ if and only if line $j$ of order $q$ is filled on time.
Orders are processed in a fixed common sequence. An order that cannot be filled
in full counts as a failure, consumes no stock, and is not backordered; successful
orders consume every requested line. After each monthly review, the replay
orders $\max(0,S-\text{inventory position})$ for each material.
Our representative
\emph{80-of-80} target aggregates this to the platform level (at least 80\% of orders
complete for at least 80\% of platforms); it tracks
the order-level metric (Section~\ref{sec:frontier}, Table~\ref{tab:frontier}). \emph{Optimizing} the per-platform axis (reserved allocation,
per-platform safety factors) is a separate inventory-control problem. Because a single short line makes an
order incomplete, the implicit-safety-stock effect of
Section~\ref{sec:diagnostic} can affect complete-order service.

\section{Experiments}
\label{sec:benchmark}
Table~\ref{tab:leaderboard} summarizes industrial and RAF accuracy
(18 of 38 methods shown across all eight classes; full roster in the release).
Four findings emerge.

\begin{table}[t]
\centering\footnotesize
\caption{Per-material lead-time MASE (Eq.~\ref{eq:mase}) and rank:
industrial order-book panel (3,828 materials) and RAF (500).
$\dagger$: no RAF rank (no-fallback subset).}
\label{tab:leaderboard}
\setlength{\tabcolsep}{5pt}
\begin{tabular}{lrrrr}
\toprule
 & \multicolumn{2}{c}{Industrial} & \multicolumn{2}{c}{RAF} \\
\cmidrule(lr){2-3}\cmidrule(lr){4-5}
Method & Rank & MASE & Rank & MASE \\
\midrule
WMA                 & 1  & 0.803 & 7  & 2.011 \\
Chronos-Bolt-tiny   & 2  & 0.890 & 4  & 1.095 \\
Chronos-Bolt-small  & 3  & 0.895 & 3  & 1.057 \\
Holt          & 4  & 0.922 & 10 & 2.931 \\
MA            & 5  & 0.971 & 12 & 3.155 \\
SES           & 6  & 0.995 & 8  & 2.905 \\
TimesFM             & 8  & 1.044 & 2  & 1.048 \\
LightGBM-Quantile & 9  & 1.060 & 1  & 0.943 \\
SVR           & 10 & 1.118 & 5  & 1.143 \\
Naive               & 11 & 1.147 & 6  & 1.169 \\
DeepAR$^\dagger$ & 12 & 1.316 & -- & 1.615 \\
TSB                 & 15 & 1.512 & 15 & 4.102 \\
TBATS         & 18 & 1.702 & 18 & 4.662 \\
XGBoost       & 23 & 2.154 & 20 & 5.090 \\
Moirai              & 34 & 3.321 & 32 & 14.178 \\
Croston             & 35 & 3.382 & 31 & 8.062 \\
HybridHierarchical & 36 & 3.519 & 29 & 7.681 \\
ADIDA         & 37 & 3.548 & 28 & 7.380 \\
\bottomrule
\end{tabular}
\end{table}

First, classical intermittent-demand methods do not lead the industrial
accuracy ranking. On the order-book panel a weighted moving average
(WMA, MASE 0.803) ranks first, with the two Chronos-Bolt foundation models just
behind (0.890, 0.895) and the intermittent specialists at the bottom. A sparsity analysis helps explain this result: native-frequency smoothing beats temporal
aggregation in 70--88\% of materials, and the best moving-average window shrinks as
sparsity rises (MA(4) at 0--20\% zeros, MA(1) above 80\%). (Under a pooled MASE on the
full benchmark the pretrained models lead instead, with TimesFM at 0.831; the ordering is
aggregation-dependent, so we anchor the service analysis to the order book.)

Second, RAF preserves the cross-class picture: the Chronos-Bolt
foundation models, TimesFM, and a quantile gradient-boosting model lead, while Moirai has much higher error
(per-material MASE 14.2). Foundation-model status alone does not imply low error on either panel.
Six out of seven deep-sequence methods receive no RAF rank: five are scored on no-fallback subsets,
and the audit excludes DeepAR-Tweedie's unstable run; TwoStageML ranks.

Third, accuracy and forecast \emph{bias direction} come apart: the low-MASE leaders
have neutral-to-negative cumulative bias, whereas the intermittent
specialists over-predict demand during zero-demand periods. This asymmetry helps
explain the service outcomes (Section~\ref{sec:diagnostic}).

Fourth, aggregate ranks do not establish dominance across demand patterns. The Friedman test ranks methods
\emph{within} each of the 31
demand-pattern segments (Section~\ref{sec:segments}) and averages those ranks to
compare methods \emph{across} demand types rather than by aggregate
error. The Friedman test rejects equality ($\chi^2=239.68$, $p<10^{-30}$, 31
segments), but the Nemenyi critical difference is wide ($\mathrm{CD}=10.90$),
wider than the entire 9.1-rank spread of the top-20 Friedman mean ranks ($11.0$ to
$20.1$, Fig.~\ref{fig:cd}): a single connecting bar would join all of these leading methods.
The segment-blocked TimesFM--WMA comparison is also inconclusive
(unadjusted $t$-test $p=0.110$); this does not establish equivalence.
The Nemenyi comparison does not separate the leading methods from a broad field
including moving averages and the Croston family.
WMA leads on order-book MASE (Table~\ref{tab:leaderboard}) but does not
consistently lead across demand types. Service offers a further deployment criterion.

\begin{figure*}[t]
\centering
\includegraphics[width=\textwidth]{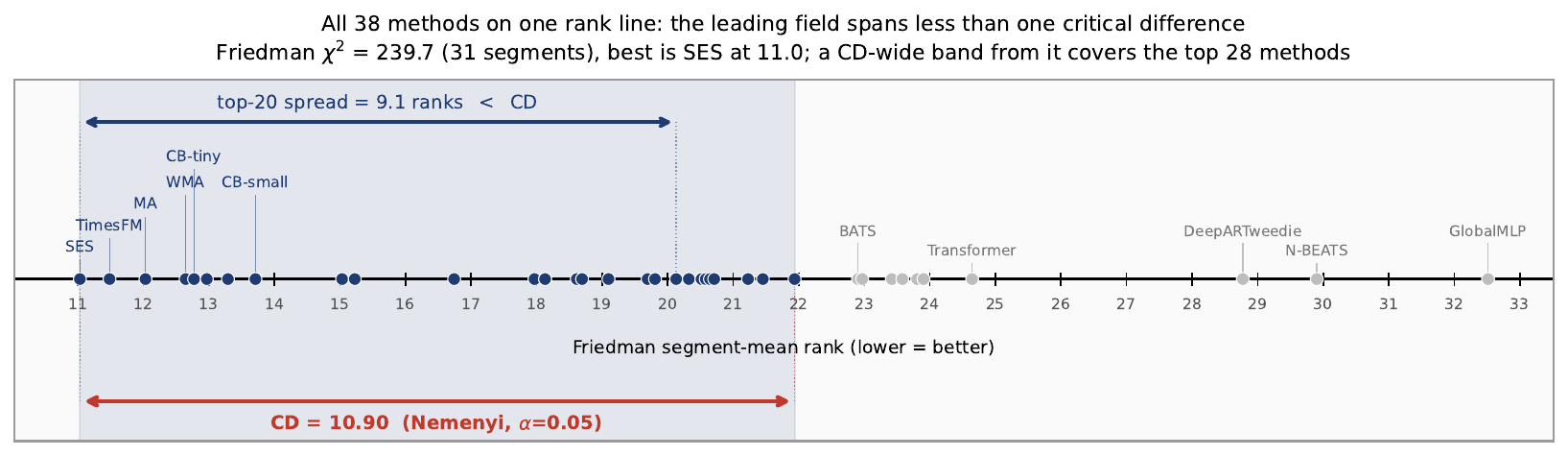}
\caption{All 38 methods on the Friedman segment-mean-rank axis. A
critical-difference-wide band anchored at the best method \emph{on this axis} (SES,
which need not match the order-book MASE leader WMA, as the two use different
aggregations) covers the top 28. Lower-ranked methods fall outside
the band.}
\label{fig:cd}
\end{figure*}

The same picture holds segment by segment (Fig.~\ref{fig:segheat}): ranking all 38
methods \emph{within} each of the 31 demand-pattern segments, no class is best
everywhere. Simple averages, the foundation models, and the lone hybrid method lead
the most segments (classes with one or two methods have high-variance means);
intermittent specialists win specific level-shift patterns and rank poorly on
others; every class performs poorly in some segments. Deployment therefore requires method choice by demand pattern, beyond aggregate ranks.

\begin{figure*}[t]
\centering
\includegraphics[width=\textwidth]{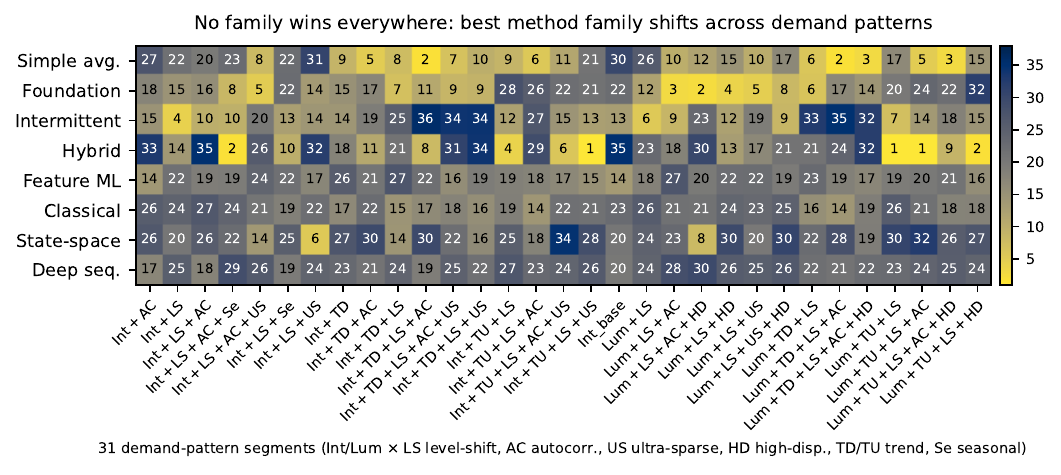}
\caption{Mean within-segment MASE rank (1 = best of 38) per model class across the 31
demand-pattern segments. Bright (yellow) indicates a low mean rank within a segment.}
\label{fig:segheat}
\end{figure*}

\subsection{The bias-direction diagnostic (BDD): when accuracy and service rankings diverge}
\label{sec:diagnostic}
We test whether forecast rankings translate into complete-order service rankings. Replaying the
operator's real multi-item orders across 41 platforms under the common policy of
Eq.~\ref{eq:basestock}, we correlate each method's MASE rank with its
complete-order fill-rate rank (COFR, Eq.~\ref{eq:cofr}). Across the full order book
(20{,}330 orders, 3{,}828 forecastable materials) the
correlation at the 80\% policy target is \emph{negative}: $\rho=-0.555$
($p=3\times10^{-4}$, $n=38$; 95\% bootstrap CI $[-0.78,-0.26]$), and it stays negative
at every service target from 0.70 to 0.95 ($\rho$ from $-0.61$ to $-0.48$). Across methods, greater
accuracy is associated with lower contractual service. Using the complete
order book rules out stratified subsampling as the source of this effect. The 38 methods include near-duplicate variants (three moving averages, two Chronos sizes,
several boosting variants). Collapsing each of the eight model classes to a single
representative provides a sensitivity check, and the inversion \emph{strengthens}
($\rho=-0.88$, $n=8$, $p=0.004$; $\rho=-0.81$ on class means). The association persists under equal class weights.

\noindent\textbf{Why accuracy inverts.}
Forecast means affect service through the inventory targets, subject to the
replay's allocation and replenishment rules.

\begin{proposition}[Forecast mean and the stock target]
\label{prop:bias}
Fix $L+R>0$, $z$, and identical dispersion estimates for two forecasts.
If $\hat\mu_{i,a}\ge\hat\mu_{i,b}$ for every material $i$, then
$S_{i,a}\ge S_{i,b}$ in Eq.~\ref{eq:basestock}. Before clipping, each target
difference equals $(L+R)(\hat\mu_{i,a}-\hat\mu_{i,b})$.
\end{proposition}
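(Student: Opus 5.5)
The plan is a direct computation from the base-stock formula in Eq.~\ref{eq:basestock}, treating each material $i$ separately because the hypothesis is material-wise. For a fixed $i$, let $\tilde S_{i,m}=\hat\mu_{i,m}(L+R)+z\,\hat\sigma_{i,m}\sqrt{L+R}$ denote the unclipped target for $m\in\{a,b\}$. The reported targets are then $S_{i,m}=\max(0,\tilde S_{i,m})$, since the methodology clips targets at zero.

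First, compute the unclipped difference. The proposition fixes $L+R$ and $z$ and assumes identical dispersion estimates, so $\hat\sigma_{i,a}=\hat\sigma_{i,b}$. The safety-stock terms $z\hat\sigma_{i,m}\sqrt{L+R}$ are therefore equal and cancel, which gives
\begin{equation*}
\tilde S_{i,a}-\tilde S_{i,b}=(L+R)\bigl(\hat\mu_{i,a}-\hat\mu_{i,b}\bigr).
\end{equation*}
This is exactly the ``before clipping'' identity in the statement. Because $L+R>0$ and $\hat\mu_{i,a}\ge\hat\mu_{i,b}$, the right-hand side is nonnegative, so $\tilde S_{i,a}\ge\tilde S_{i,b}$.

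Second, pass to the clipped targets. The map $x\mapsto\max(0,x)$ is nondecreasing, so $\tilde S_{i,a}\ge\tilde S_{i,b}$ implies $S_{i,a}\ge S_{i,b}$. Clipping preserves the ordering but not the exact difference. This is why the statement claims the identity only before clipping: after clipping, the difference lies between $0$ and $(L+R)(\hat\mu_{i,a}-\hat\mu_{i,b})$, since $\max(0,\cdot)$ is $1$-Lipschitz and monotone.

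The only point needing care is the meaning of ``identical dispersion estimates.'' I will read it as $\hat\sigma_{i,a}=\hat\sigma_{i,b}$ for each material $i$, not as equality of some aggregate, because the cancellation is per material. Under that reading there is no real obstacle; the result is a monotonicity remark. I would close by noting what the proposition does not claim. It does not say that higher targets mechanically raise COFR. That link also runs through the replay's allocation, ordering sequence, and lead-time dynamics, which the paper flags separately.
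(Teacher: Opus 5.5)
Your proposal is correct and matches the paper's own proof: subtract the two base-stock equations, note that the safety terms cancel under identical per-material $\hat\sigma$, use $L+R>0$ to get the ordering, and use monotonicity of clipping at zero. Your extra observation is also correct, though the paper does not state it: after clipping, the difference lies between $0$ and $(L+R)(\hat\mu_{i,a}-\hat\mu_{i,b})$.
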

\begin{proof}
Subtract the two target equations. Their safety terms cancel; multiplication by
$L+R>0$ and clipping at zero preserve the componentwise ordering.
\end{proof}

This is a target ordering, not a pathwise service guarantee. In the complete-order
replay, rejecting an unfillable order preserves stock for later orders; a larger
target can instead serve that order and deplete stock. Partial fulfillment also
does not ensure pathwise monotonicity when higher targets change replenishment
timing. The service association is empirical, and a larger panel-average mean
alone does not imply the componentwise premise of Proposition~\ref{prop:bias}.

We distinguish signed lead-time bias, the mean of
$\hat Y^{(H)}_{i,o,m}-Y^{(H)}_{i,o}$ over valid material--origin pairs, from
12-month drift, the material-average sum $\sum_{t=1}^{12}(\hat y_{i,t,m}-y_{i,t})$.
The separate absolute-error decomposition is
\begin{equation}\label{eq:decomp}
\mathrm{MAE}_m=\underbrace{\tfrac{1}{N}\!\sum_{t:\,y_t=0}\!|\hat y_t|}_{\text{zero-period}}
\;+\;\underbrace{\tfrac{1}{N}\!\sum_{t:\,y_t>0}\!|y_t-\hat y_t|}_{\text{non-zero-period}}.
\end{equation}
Here $N$ counts evaluated material--month pairs; the zero-period error share
divides the first term by total MAE. In the industrial diagnostic, 53--54\% of Croston/SBA error is the zero-period term
(over-prediction when nothing was demanded) versus $\approx$26\% for Naive:
positive errors on zero-demand months can offset under-prediction of demand
spikes. The 12-month drift diagnostic on a separate 500-material industrial
sample makes the directional split visible.
Methods with positive cumulative drift include Croston ($+8.9$, 86\% of
materials over-forecast), ADIDA ($+8.5$), Moirai ($+8.8$), SBA ($+7.3$), LightGBM
($+5.3$), TSB ($+4.5$). The accuracy-leading foundation models drift the
other way (TimesFM $-8.3$, only 51\% over-forecast; Chronos-Bolt-tiny $-7.1$), and
deliver lower service under the same policy. Accuracy and service thus favor
different bias directions here.

The public RAF and M5 panels carry no multi-item order log, so to compare the three
regimes we use the per-material fill proxy: the fraction of positive-demand
periods fully served, averaged equally over evaluated material windows.
A no-demand window scores 1. Partial demand is consumed and shortages are lost;
replenishment to $S$ is triggered when inventory position is at most
$s=\max(0,S-\hat\mu R)$. This proxy is available on all panels
(Fig.~\ref{fig:scatter}). The same inversion appears on the industrial panel ($\rho=-0.582$, 38
methods), weakens on RAF ($-0.173$, 38 methods; this replay includes the
intermittent specialist iMAPA in place of Prophet), and turns positive on
M5 ($+0.359$, 38 methods). The industrial per-material value is
comparable in magnitude to the contractual order-level inversion ($\rho=-0.555$,
above), while the cross-panel pattern provides
evidence of regime dependence (Section~\ref{sec:regime}). The RAF correlation
uses pooled rolling-origin accuracy from its separate service-replay stream,
not the per-material accuracy column of Table~\ref{tab:leaderboard}; method
records are paired, but available material coverage differs for some methods.

\begin{figure*}[t]
\centering
\includegraphics[width=0.92\textwidth]{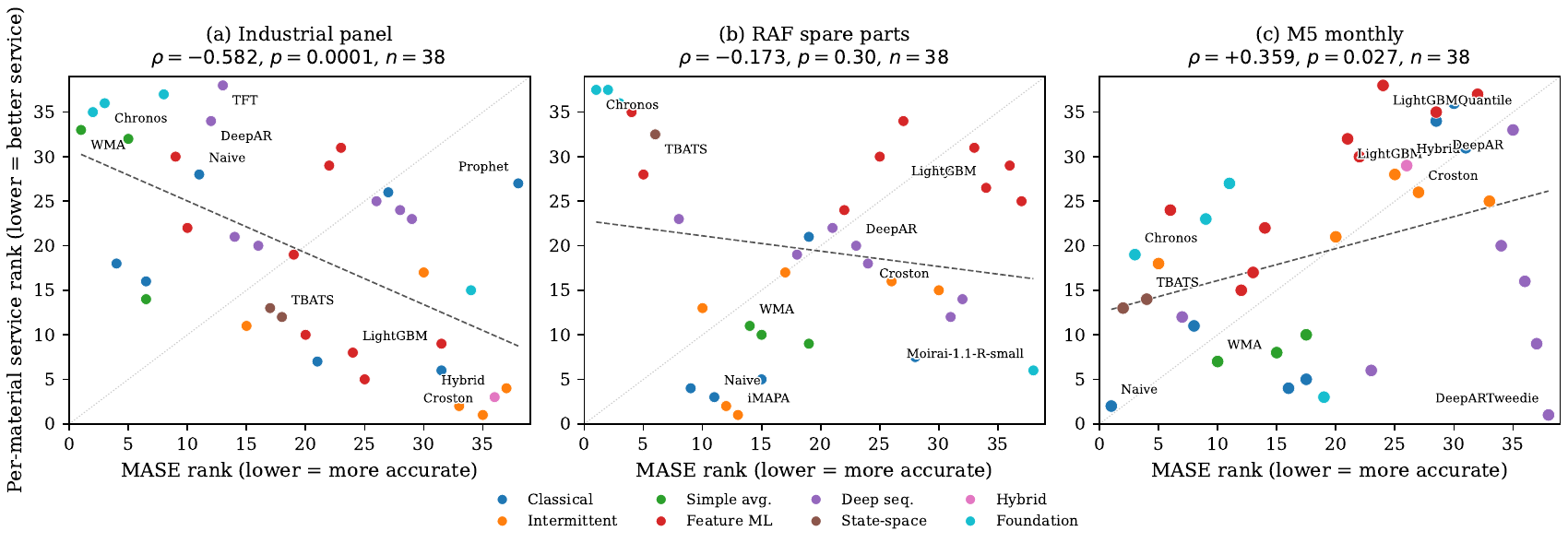}
\caption{MASE rank vs.\ per-material service rank at the 80\% target:
(a)~industrial ($\rho=-0.582$), (b)~RAF ($\rho=-0.173$), (c)~M5 ($\rho=+0.359$).}
\label{fig:scatter}
\end{figure*}

\noindent\textbf{Robustness to policy form.}
We first test whether the inversion depends on the Gaussian-$z$
order-up-to policy (Eq.~\ref{eq:basestock}). Rerunning the full 38-method
diagnostic on the per-material fill proxy under a \emph{distribution-aware}
policy (ordering to the empirical $\tau$-quantile of each method's
lead-time-demand distribution) \emph{strengthens} the inversion at every service
level. In a matched comparison at the 80\% target the per-material $\rho$ moves from
$-0.59$ (Gaussian) to $-0.70$ (distribution-aware, all $p<10^{-4}$), and the
contractual order-level inversion ($\rho=-0.555$) is of the same sign and magnitude.
Lower MASE is associated with worse service under both tested policy forms,
suggesting that the inversion extends beyond the Gaussian approximation.

\noindent\textbf{Scope: the common-policy regime.}
A further concern is that the over-forecasters merely run at a higher effective
service level, and that giving each method its own safety factor would erase the
inversion. We tested this on the full order book by re-deriving each method's
order-up-to level under its own \emph{cost-optimal} $z$, and under the more permissive
$z$ that \emph{maximizes} its own service. The inversion attenuates:
$\rho$ moves from $-0.555$ at common $z$ to $-0.278$ under each method's
cost-optimal $z$ (no longer significant at the 5\% level, $p=0.09$), and
with a service-maximizing $z$ for each method it remains $-0.384$ ($p=0.02$). Within the tested range of $z$,
methods with negatively biased forecasts attain lower
complete-order fill rates (TimesFM and WMA reach only COFR
$0.57$--$0.63$ at the largest safety factor, while the over-forecasters reach
$0.97$--$0.98$). Per-method retuning requires order-level simulation, whereas the initial BDD
screen uses forecast diagnostics. The inversion is clearest under the common-$z$ policy; cost-optimal tuning
attenuates it below statistical significance.

\subsection{Why the inversion is regime-specific}
\label{sec:regime}
The same diagnostic helps explain the cross-dataset pattern. On RAF
the bias populations roughly balance: foundation models negative ($-7.7$ to
$-7.2$), Croston-family positive ($+2$ to $+8$), and the rank correlation weakens
to $-0.173$. On M5, zeros are rare in the evaluation window ($\approx$5\% of
monthly cells in the final 12 months, versus 33\% over all 60 months in
Table~\ref{tab:data}), so the zero-period error
term appears insufficient to overturn the accuracy ordering. The accuracy--service relationship is positive ($+0.359$).
The inversion is consistent with an interaction between heavy intermittency and directional
cumulative forecast bias. Matched-subset checks suggest that sparsity alone is insufficient
(Section~\ref{sec:robustness}).

\subsection{Service efficiency and the 80-of-80 target}
\label{sec:frontier}
Within a single method, scaling the forecast trades inventory for service
(Section~\ref{sec:ladder}). Table~\ref{tab:frontier} shows the same trade
\emph{across} methods, against a representative \emph{80-of-80} target: the share of
the 41 platforms reaching COFR\,$\ge0.80$ when the 20{,}330 real multi-item orders are
replayed, met if $\ge80\%$ of platforms do. Three findings emerge. (i)~The lowest-MASE methods miss the target: the
leading foundation models hold the \emph{least} inventory ($0.37$--$0.54\times$ Naive) and rarely meet the platform threshold ($\le2\%$ of platforms), and the order-book leader WMA clears only 12\%.
(ii)~Only the over-forecasting methods clear it (Croston, SBA, ADIDA, HybridHierarchical,
TwoStageML, ETS; $80$--$93\%$ of platforms; Table~\ref{tab:frontier} shows
four of the six), at $1.2$--$1.5\times$ Naive's inventory:
directional bias improves service at a measurable inventory cost. (iii)~The target reproduces the
inversion: the most accurate methods miss it most severely. The platform roll-up
\emph{corroborates} the order-level reading rather than adding a criterion: the same six
methods clear both (Spearman 0.93 between the COFR and \%-platform columns),
indicating strong agreement between the rankings.
We \emph{report}, not optimize, this target (Section~\ref{sec:setting}).

\begin{table}[t]
\centering\footnotesize
\caption{Service--inventory frontier against the 80-of-80 target (industrial):
order-level COFR (Eq.~\ref{eq:cofr}), per-material inventory relative to Naive,
and the share of platforms at COFR$\,\ge\,$0.80.}
\label{tab:frontier}
\setlength{\tabcolsep}{3pt}
\begin{tabular}{lrrrc}
\toprule
Method & COFR & Inv.\ ($\times$Naive) & \%\,plat.\,$\ge0.80$ & 80-of-80 \\
\midrule
TimesFM            & 0.544 & 0.37 & 2\%  & $\times$ \\
Chronos-Bolt-small & 0.567 & 0.54 & 2\%  & $\times$ \\
WMA                & 0.558 & 0.96 & 12\% & $\times$ \\
Naive              & 0.627 & 1.00 & 15\% & $\times$ \\
Moirai             & 0.719 & 1.38 & 54\% & $\times$ \\
LightGBM           & 0.771 & 1.16 & 51\% & $\times$ \\
TSB                & 0.793 & 1.22 & 56\% & $\times$ \\
SBA                & 0.847 & 1.36 & 80\% & \checkmark \\
Croston            & 0.858 & 1.42 & 88\% & \checkmark \\
ADIDA              & 0.906 & 1.49 & 93\% & \checkmark \\
HybridHierarchical & 0.903 & 1.47 & 90\% & \checkmark \\
\bottomrule
\end{tabular}
\end{table}

\noindent\textbf{Operationalizing the BDD.}
The BDD (Fig.~\ref{fig:diagnostic-box}) screens forecasts before simulation:
flag low-MASE methods with negative cumulative drift as stockout risks and assess
the inventory cost of positive drift. Final COFR validation is required before deployment.

\begin{figure}[t]
\centering
\fbox{\begin{minipage}{0.93\linewidth}
\footnotesize
\textbf{Bias-Direction Diagnostic (BDD): deployment check}\\[1pt]
\emph{Input:} candidate forecasts; recent demand history.
\begin{enumerate}\setlength{\itemsep}{1pt}\setlength{\parskip}{0pt}
  \item Compute MASE and mean \emph{signed} lead-time bias.
  \item Compute the zero-period over-prediction share and the sign of cumulative drift.
  \item Flag: low MASE $+$ \emph{negative} drift $\Rightarrow$ stockout risk;
        \emph{positive} drift $\Rightarrow$ implicit safety stock; low MASE $+$
        near-zero drift $\Rightarrow$ deployment candidate.
  \item If negatively biased, correct the forecast \emph{mean} (non-zero re-centering, or
        scale toward bias-neutral $\alpha^\star$), with subsequent service and inventory validation.
  \item Validate on COFR at the contractual target before deployment.
\end{enumerate}
\end{minipage}}
\caption{A reusable forecast-side deployment check.}
\label{fig:diagnostic-box}
\end{figure}

\subsection{Correcting the bias inside the model}
\label{sec:adaptation}
The bias-direction mechanism motivates a correction \emph{within the model}.
Across a 31-configuration fine-tuning sweep (low-rank adaptation (LoRA)
ranks 4/8/16, data filtering, longer schedules), no configuration outperforms
zero-shot Chronos-2. Material-level 5-fold cross-validation gives the same result.
With about 70\% of observations equal to zero, fine-tuning may favor
near-zero predictions and reinforce the bias. These results motivate examining normalization as a possible source of bias.

Following the diagnostic in
Section~\ref{sec:diagnostic}, we examine how Chronos-2 normalizes each input series:
\begin{equation}\label{eq:in-std}
\hat x_t=\frac{x_t-\bar x}{s},\qquad \bar x=\mathrm{mean}(x),\;\; s=\mathrm{std}(x),
\end{equation}
Across 4,558 materials over the full 81-month descriptive window, the median
overall mean is 0.47 and the median positive-demand mean is 2.10. These are
separate panel summaries: the median within-material ratio is 5.06, and 80.3\%
of materials exceed $2\times$ (Fig.~\ref{fig:instancenorm}). Equation~\ref{eq:in-std}
therefore centers the representation on a mean dominated by zeros, which
may contribute to the negative bias observed in the benchmark. We replace it with the non-zero-conditional
mean $\bar x_{\mathrm{nz}}=\mathbb{E}[x\mid x>0]$ and standard deviation
$s_{\mathrm{nz}}=\mathrm{std}(x\mid x>0)$ (Eq.~\ref{eq:in-nz}); we refer to this normalization change as
\emph{non-zero-conditional re-centering}.
\begin{equation}\label{eq:in-nz}
\hat x^{\mathrm{nz}}_t=\frac{x_t-\bar x_{\mathrm{nz}}}{s_{\mathrm{nz}}}.
\end{equation}
Normalization uses each origin's training context only. With no positive
observations the center is zero; a zero conditional scale (including a single
positive observation) is replaced by the model's $\epsilon$. The stored center
and scale also invert the transform, after undoing any enabled arcsinh mapping.
The change increases forecast error (MASE $0.831\!\to\!0.948$, consistent with the pretrained
weights being adapted to the original centering) but corrects bias ($-2.52\!\to\!-0.10$) and
lifts the 90\% per-material fill proxy from 0.775 to \textbf{0.920} ($+14.5$pp), at
\emph{zero training cost}. On the complete-order service metric the same correction lifts
Chronos-2's COFR from 0.54 to \textbf{0.63} (platforms passing $5\%\to15\%$).
The separate hurdle run's mean lead-time forecast rises from 6.5 to 8.9
(Table~\ref{tab:hurdle}), a forecast ratio of $\approx1.4\times$, not a measured stock ratio.
This brings order service close to the Naive baseline.

The corrected model still misses the 80-of-80 target that specialists meet with
$1.4$--$1.5\times$ Naive's inventory. The proxy gain holds across targets ($+14$ to
$+17$pp) and \emph{replicates on RAF} ($+17.1$pp, $0.607\!\to\!0.778$; the centering discrepancy
is widespread there). Table~\ref{tab:adaptation} compares selected approaches from the
thirteen strategies (over 140 configurations) we tested (its service column
is the per-material fill proxy, not the order-level COFR of Eq.~\ref{eq:cofr}):
MixFT routes materials to cluster-specific LoRA adapters and approximately
matches zero-shot accuracy. Pruning 20\% increases MASE by approximately $0.0024$, suggesting some parameter redundancy
for this task. Changes to normalization, forecast extraction, and downstream
calibration improve service more than fine-tuning in these tests.

\begin{table}[t]
\centering\footnotesize
\caption{Chronos-2 adaptation: pooled MASE (4,558 materials; origins 69, 75),
$\Delta$MASE vs zero-shot (positive = better; computed before rounding), and
fill proxy at target 0.90. SBC: Syntetos--Boylan class-balanced training.}
\label{tab:adaptation}
\setlength{\tabcolsep}{4pt}
\renewcommand{\arraystretch}{1.2}
\begin{tabular}{lrrr}
\toprule
Strategy & MASE & $\Delta$MASE & Fill proxy@0.90 \\
\midrule
Zero-shot (baseline)                          & 0.831 & n/a      & 0.775 \\
InstanceNorm $\bar x_{\rm nz}$ (no training)  & 0.948 & $-0.117$ & \textbf{0.920} \\
MixFT sub-domain (K=6)                        & 0.832 & $-0.001$ & 0.753 \\
Magnitude pruning (20\%)                      & 0.834 & $-0.002$ & 0.776 \\
Quantile selection (90th pctl)                & 3.603 & $-2.771$ & \textbf{0.946} \\
Best LoRA (SBC-balanced r8)                   & 0.834 & $-0.003$ & n/a \\
\bottomrule
\end{tabular}
\end{table}

\begin{figure}[t]
\centering
\includegraphics[width=0.98\linewidth]{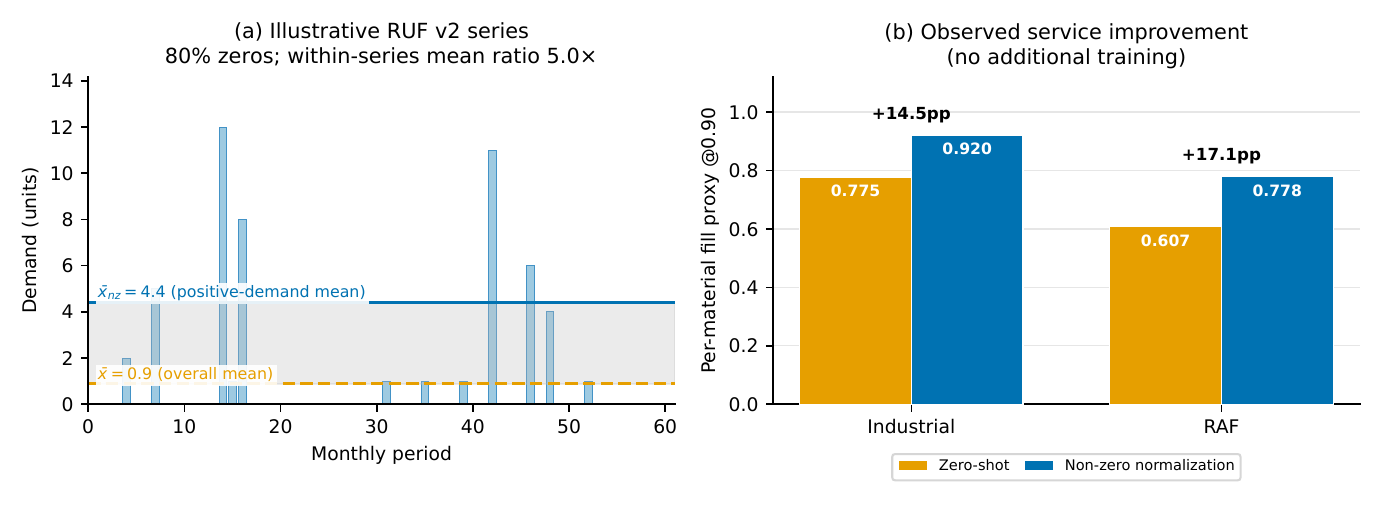}
\caption{The instance-normalization correction: (a)~an illustrative RUF v2
series showing the overall mean $\bar x$ and positive-demand mean $\bar x_{\mathrm{nz}}$;
(b)~the fill-proxy lift from re-centering on $\bar x_{\mathrm{nz}}$, both panels.}
\label{fig:instancenorm}
\end{figure}

\subsection{A continuous bias--service mechanism}
\label{sec:ladder}
We scale TimesFM's rate forecast as $\hat y'=\alpha\hat y$, holding dispersion
fixed, on 500 industrial materials at policy target 0.80. Training uses 69 months;
accuracy uses the next six months and the partial-fulfillment simulation spans
12 months ($L=6$, $R=1$). This is a \emph{per-material fill-proxy} study,
distinct from Table~\ref{tab:frontier}'s complete-order replay.
The unscaled reference ($\alpha=1$) has MASE 0.945 and fill proxy 0.636.
Across the tested $\alpha=0.5$--2.0 grid, proxy service increases from 0.52 to 0.72,
while mean inventory value grows from $0.3\times$ to $3.1\times$ its value at
$\alpha=1$. The most accurate setting ($\alpha=0.75$, MASE 0.859) serves less well
(proxy 0.585). The near-bias-neutral $\alpha^\star\approx1.39$ yields proxy 0.677
at $1.76\times$ unscaled inventory. This factor is calibrated retrospectively
from evaluation demand; prospective deployment requires a separate calibration
window. These are observed trade-offs, not a pathwise monotonicity guarantee.
Thus accuracy and service optima diverge within one method. Like scaling toward $\alpha^\star$, re-centering
reduces negative bias, but changes the representation without retraining.

\subsection{A structural (hurdle) test}
\label{sec:hurdle}
Because changing the forecast mean changes its stock target
(Proposition~\ref{prop:bias}), we also tested a \emph{structural} alternative: a
hurdle model that estimates occurrence $p_t=\Pr(Y_t>0)$ from the full history and size
$\mathbb{E}[Y_t\mid Y_t>0]$ from the non-zero-centered Chronos, then mixes them.
This is not a matched-mean experiment. The expected hurdle lowers mean LTD from
6.5 to 5.4 and increases fill proxy from 0.778 to 0.791, but trails direct
re-centering (0.924). The $\tau=0.9$ size quantile achieves 0.881 while raising
mean LTD to 18.2, bias to $+9.5$, and MASE to 2.20. The results do not isolate
mean, dispersion, and distributional shape. Table~\ref{tab:hurdle} uses a
separate full-panel run; its zero-shot and re-centering MASE and fill-proxy values
differ from Table~\ref{tab:adaptation} by less than 0.01.

A \emph{distribution-aware} policy that consumes the hurdle's full mixed quantile
also incurs an inventory penalty. The hurdle has lower service efficiency in the reported comparison:
for low-occurrence materials, its service quantile often
falls inside the zero mass, setting an order-up-to level of zero
(60.8\% of materials at $\tau{=}0.70$) and limiting their service. At matched fill proxy $0.85$,
it needs about $1.6\times$ the inventory of plain re-centering. Using the full forecast distribution raises
\emph{achievable} service at an inventory cost; mean-correcting re-centering is the most
inventory-efficient correction tested at the reported service targets.

\begin{table}[t]
\centering\footnotesize
\caption{Hurdle test: pooled MASE and service over all 4,558 industrial
materials at origins 69, 72, 75 (six-month horizons).
LTD: lead-time demand; service is the per-material fill proxy.}
\label{tab:hurdle}
\setlength{\tabcolsep}{4pt}
\begin{tabular}{lrrrr}
\toprule
Method & MASE & Bias & Mean LTD & \shortstack{Fill proxy\\@0.90} \\
\midrule
Zero-shot (standard norm)   & 0.834 & $-2.27$ & 6.5  & 0.778 \\
Non-zero re-centering       & 0.955 & $+0.14$ & 8.9  & \textbf{0.924} \\
Hurdle, expected            & 0.926 & $-3.31$ & 5.4  & 0.791 \\
Hurdle, $\tau{=}0.7$ size    & 1.054 & $-1.25$ & 7.5  & 0.771 \\
Hurdle, $\tau{=}0.9$ size    & 2.198 & $+9.52$ & 18.2 & 0.881 \\
\bottomrule
\end{tabular}
\end{table}

\subsection{The findings reproduce on the RUF panel}
\label{sec:repro}
On the released RUF panel (Section~\ref{sec:benchmark-release}) the
main findings extend to the full 38-method roster. The accuracy leaders again include foundation models and the weighted
moving average (WMA, TimesFM, and Chronos-Bolt occupy MASE ranks 1--4). The
decision-facing inversion reproduces across the 5{,}000-material panel: MASE rank
and per-material service rank correlate at $\rho=-0.47$ at the 80\% target, negative at
every service level (run on the released RUF v2 panel; significance, bootstrap,
and regeneration-seed checks are in Table~\ref{tab:robust}). This is the same sign
as the real panel and comparable to the
industrial per-material $-0.58$. The training-free instance-normalization correction
(Section~\ref{sec:adaptation}) reproduces too: re-centering Chronos-2 on
non-zero-conditional statistics lifts the per-material fill proxy by $+17.6$ to
$+21.4$pp across service targets ($79.1\%$ of materials show the $>2\times$ centering
artifact), with an increase in MASE ($0.765\!\to\!0.807$), the same trade-off as on the
industrial panel. The public panel corroborates the leaderboard, correction, and inversion
direction; contractual magnitude remains a private-panel result.

\subsection{Robustness}
\label{sec:robustness}
We assess the sensitivity of the main results
to the checks in Table~\ref{tab:robust}, including the attenuation under per-method policy tuning.

\begin{table}[t]
\centering\footnotesize
\caption{Threats to validity and corresponding robustness checks. The policy-form check is summarized in
Section~\ref{sec:diagnostic}.}
\label{tab:robust}
\setlength{\tabcolsep}{3pt}
\renewcommand{\arraystretch}{1.15}
\begin{tabularx}{\linewidth}{@{}>{\raggedright\arraybackslash}p{0.16\linewidth} >{\raggedright\arraybackslash}p{0.23\linewidth} >{\raggedright\arraybackslash}p{0.32\linewidth} >{\raggedright\arraybackslash}X@{}}
\toprule
Threat & Check & Result & Reading \\
\midrule
Metric artifact & MASE lag-1/3/12 & rankings stable, $\rho\ge0.99$ & stable across tested scales \\ \hline
Lucky sample & caps$\times$seeds (12 cfg) & top cluster identical, $\rho\ge0.988$ & stable across tested samples \\ \hline
Sparsity alone & RAF-matched industrial subsets & inversion persists, $\rho\,{\approx}\,{-}0.42$ to ${-}0.47$ & sparsity alone insufficient \\ \hline
Gaussian-policy artifact & distribution-aware quantile policy (\S\ref{sec:diagnostic}) & inversion \emph{strengthens}, $-0.59\!\to\!-0.70$ & holds under both forms \\ \hline
Lead-time artifact & order-level $\rho$ at $H{\in}\{1,3,6,12\}$ & sign stable, \emph{strengthens} with $H$, $-0.47$ to $-0.61$ & stable across tested horizons \\ \hline
Per-method $z$-tuning & cost-optimal / service-max $z$ per method & sign remains negative; cost-optimal $z$: $\rho=-0.28$, $p=0.09$; service-max $z$: $-0.38$, $p=0.02$ & attenuated under per-method tuning \\ \hline
``Just overstocking'' & bias ladder (\S\ref{sec:ladder}) & service costs inventory; bias-neutral operating point & a controlled trade \\ \hline
Adaptation overfit & material-level 5-fold CV & zero-shot 0.905 vs.\ LoRA $\ge$0.918 & holds out-of-sample \\ \hline
Reliance on private data & synthetic 38/38 replay (5{,}000 mat.) & reproduces, $\rho=-0.47$ ($p<0.05$ at every target; $p<0.005$ for 0.70--0.90, $n=38$); CI $[-0.76,-0.10]$ excludes 0; seeds 43--47: $-0.41$ to $-0.61$; gate PASS & publicly checkable \\
\bottomrule
\end{tabularx}
\end{table}

All cited streams are \emph{audited} (method failures are logged and counted, never
silently replaced), and the panels are kept distinct (the industrial slices are
enumerated in the Section~\ref{sec:data} footnote). RAF uses a seed-fixed,
Syntetos--Boylan-stratified 500-series sample, sized by the cost of running 38
methods over rolling origins. Table~\ref{tab:leaderboard} ranks
full-coverage audited methods; failure-subset scores are unranked. Its BATS/TBATS
scores use 500 series, whereas the separate rolling-origin service stream uses
100 for their state-space refits. Thus RAF's 38-method correlation is an
available-case replication, not a uniformly matched-material comparison.
The caps$\times$seeds check (Table~\ref{tab:robust}) leaves the top
cluster identical ($\rho\ge0.988$). No tested protocol variation reverses the sign of the industrial inversion, although
its magnitude and statistical significance depend on the evaluation choices.

\section{Conclusion}
\label{sec:conclusion}
We benchmarked 38 forecasting methods across eight model classes on an
industrial spare-parts panel and public panels. The accuracy ranking
\emph{inverts} against the contractual complete-order service target on the
industrial panel. The corresponding relationship with per-material service
weakens on the public RAF panel and is positive on low-intermittency M5: a regime-dependent
forecast--inventory relationship whose sign tracks forecast bias direction.
Examining that bias in a foundation model's instance normalization yielded a
zero-training-cost correction that recovers lost service on both panels,
including order-level COFR on the industrial panel.

For deployment, accuracy and service can favor different model choices: WMA
tops the order-relevant industrial accuracy ranking, while bias direction helps
explain service differences. Correcting this bias improves service despite increasing forecast error.
For Chronos-2, modifying normalization is effective, while the tested fine-tuning
configurations do not improve on zero-shot accuracy.

\emph{RUF} addresses the reproducibility barrier posed by confidential industrial data.
We rerun the findings on a regenerate-until-fidelity surrogate certified against the real panel.
The leaderboard, correction, and inversion direction reproduce. Adapted to other count or continuous
series, the generate-then-certify loop could provide open certified surrogates
for confidential corpora and decision-aware benchmarks in other domains.

\textbf{Limitations and future work.} The industrial evidence comes from one domain at monthly
granularity, and the 80-of-80 target is illustrative. The M5 comparison suggests that the inversion
depends on heavy intermittency with directional forecast bias; we do not claim the negative
correlation transfers to fast-moving demand. The diagnostic holds a common inventory policy
fixed to isolate forecast effects; future work should jointly optimize forecast
choice and cost-aware inventory policy, including distribution-aware decisions and copula dependence. The service gains from
distribution-aware decisions carry an inventory cost
(Section~\ref{sec:hurdle}). A second direction is directive-conditioned forecasting,
which incorporates forward-looking operational information unavailable from demand
history alone. Finally, the reported service gains carry inventory trade-offs;
Proposition~\ref{prop:bias} orders stock targets, not realized service. Jointly optimizing complete-order service and inventory cost is therefore a
multi-objective next step.

Practitioners should measure forecast bias direction alongside accuracy before
deployment. Under an asymmetric service contract, the lowest-error forecast
need not deliver the highest service.

\noindent\textbf{Availability.} The RUF benchmark (v1 structural and v2
parametric, panel of Section~\ref{sec:repro}), its generators,
evaluation code, full public-panel leaderboard, and a per-artifact reproduction
guide are at \url{https://github.com/sfcheng-research/icdm-2026-reproduction}. The
industrial panel is confidential.


\end{document}